\documentclass[letterpaper, 10 pt, conference]{ieeeconf}  

\IEEEoverridecommandlockouts                              

\usepackage{graphics}
\usepackage{epsfig}
\usepackage{mathptmx}
\usepackage{times}
\usepackage{amsmath}
\usepackage{amssymb}

\usepackage{amsthm}
\let\labelindent\relax
\usepackage{enumitem}

\usepackage{tabularx}
\usepackage{algorithm}
\usepackage{algpseudocode}
\usepackage{array}
\usepackage{booktabs}
\usepackage{threeparttable}
\usepackage{siunitx}
\usepackage{subfig}
\newtheorem{proposition}{Proposition}
\theoremstyle{remark} 
\newtheorem{remark}{Remark}

\title{\LARGE \bf
L-Learning : A Lyapunov-Based Approach Leveraging Lagrangian Mechanics for Efficient and Stable Robot Tracking
}

\author{
Quan Quan$^{1}$ and Hao Li$^{1}$
\thanks{$^{1}$Quan Quan and Hao Li are with School of Automation Science and Electrical Engineering, Beihang University, Beijing 100191, China
        {\tt\small qq\_buaa@buaa.edu.cn}
        {\tt\small lh912@buaa.edu.cn}}%
}

\begin{document}

\maketitle
\thispagestyle{empty}
\pagestyle{empty}

\begin{abstract}
This paper presents L-Learning, a novel data-driven control framework for robotics that integrates Lyapunov stability theory with Lagrangian mechanics to enhance trajectory tracking performance. While traditional control methods often suffer from performance degradation in dynamic and uncertain environments, data-driven approaches—though more adaptable—are frequently limited by high sample complexity and a lack of rigorous stability guarantees. L-Learning mitigates these challenges by explicitly learning the system's energy function from data, thereby optimizing performance while ensuring closed-loop stability intrinsically. Characterized by superior control accuracy, theoretical stability guarantees, and high sample efficiency, L-Learning represents a promising solution for practical robotic applications.
\end{abstract}

\section{INTRODUCTION}

The prominence of data-driven control in robotic systems has increased significantly in recent years, driven by the growing complexity of operational tasks. Conventional control methodologies, which rely on explicit mathematical models of system dynamics, often prove insufficient in environments where such models are elusive or subject to high uncertainty. Conversely, data-driven approaches exploit available sensory data to synthesize optimal control policies, offering superior adaptability and scalability in real-world scenarios. Consequently, in the domain of robotics, which is characterized by frequent interactions with stochastic and dynamic environments, the capacity to implement learning-based controllers that simultaneously ensure robust performance and rigorous system stability is paramount.

Currently, the landscape of robot learning control is dominated by two primary paradigms: reinforcement learning (RL) \cite{sutton1998reinforcement,bertsekas2019reinforcement} and certificate learning \cite{boffi2021learning,dawson2023safe}. RL strategies, including Q-Learning \cite{watkins1992q} and deep reinforcement learning \cite{mnih2013playing,schulman2015trust,haarnoja2018soft}, focus on policy optimization via exploration and exploitation, often yielding high performance in complex tasks. However, these methods frequently exhibit prohibitive sample complexity and a lack of theoretical stability guarantees. These deficiencies are particularly critical in safety-critical applications. Although enhancements such as experience replay, target networks \cite{heess2015memory}, asynchronous methods \cite{mnih2016asynchronous}, and Proximal Policy Optimization (PPO) \cite{schulman2017proximal} have been proposed to mitigate these issues, stability concerns persist. In contrast, certificate learning has gained traction for its ability to provide formal safety assurances. By incorporating certificates such as Lyapunov or barrier functions, these methods ensure that learned controllers adhere to strict stability and safety constraints. Within this framework, Lyapunov function learning has emerged as a cornerstone for guaranteeing stability in dynamic systems \cite{dawson2023safe,yang2024lyapunov,chang2019neural,dai2021lyapunov,zhou2022neural}. By leveraging Lyapunov functions, defined as mathematical constructs that verify system stability through energy dissipation, data-driven controllers can be engineered to optimize performance while strictly maintaining stability. Notably, D-Learning \cite{quan2024control} facilitates the direct learning of Lyapunov functions from data to derive stable control policies, extending the utility of Q-Learning-like approaches to complex nonlinear systems.

\begin{figure}[h]
    \centering
    \includegraphics[width=1.0\linewidth]{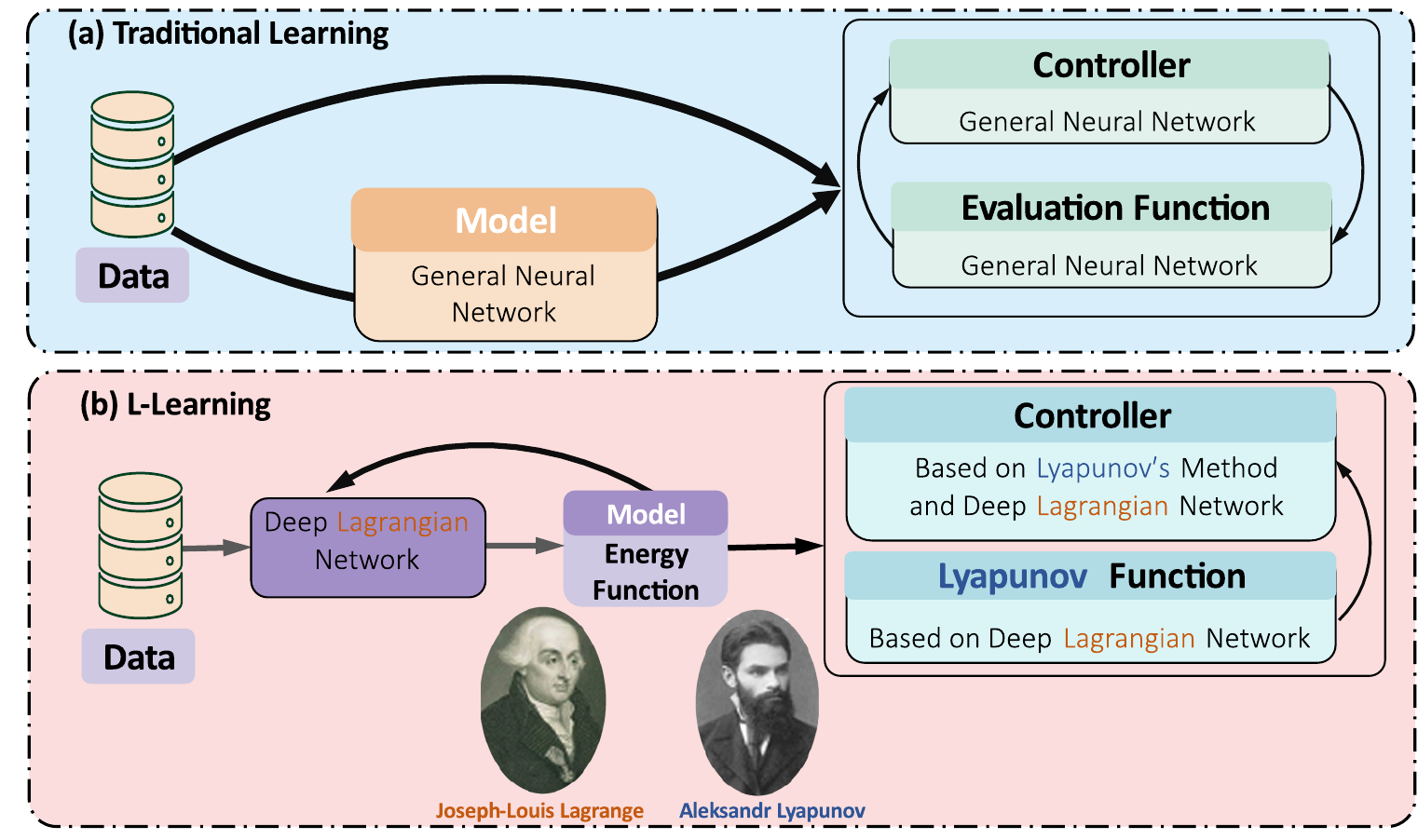}
    \caption{Comparison of Learning-based Control Flow: (a) Traditional learning algorithm vs. (b) L-Learning.}
    \label{fig:method_compare}
\end{figure}

As depicted in Figure \ref{fig:method_compare}(a), standard learning-based control frameworks rely on the iterative and joint optimization of multiple networks to refine the system model, controller, and value function. While powerful, these architectures suffer from inherent scalability constraints. Crucially, their high sample complexity necessitates prohibitive volumes of training data and computational power. Furthermore, the process of learning evaluation functions (e.g., Lyapunov candidates for stability certification) is resource-intensive, limiting their deployment in complex, high-dimensional robotic environments. This underscores the urgency for advanced control paradigms capable of improving data efficiency and reducing the computational costs associated with training.

In response to these challenges, physics-informed machine learning \cite{karniadakis2021physics,cai2021physics,cuomo2022scientific} has gained traction by incorporating physical inductive biases into learning algorithms. Notable approaches \cite{cranmer2020lagrangian,lutter2023combining, greydanus2019hamiltonian} explicitly integrate Lagrangian or Hamiltonian mechanics into the deep learning architecture. This strategy constrains the hypothesis space to adhere to underlying physical laws, allowing models to synergize the expressiveness of neural networks with rigorous dynamic consistency, thereby achieving enhanced prediction accuracy and generalization.

Predicated on this philosophy, we introduce L-Learning, an innovative paradigm that integrates Lagrangian mechanics and Lyapunov stability theory for robust robot trajectory tracking. In Lagrangian mechanics \cite{goldstein2002classical,morin2008introduction}, the system's governing dynamics are fully characterized by the Lagrangian $L$, a quantity derived from the system's kinetic and potential energy. In parallel, Lyapunov theory provides a rigorous tool for stability analysis, employing an energy-like scalar function whose temporal evolution dictates the system's convergence properties. Specifically, a strictly decreasing Lyapunov function corresponds to energy dissipation \cite{khalil2002nonlinear}, ensuring asymptotic stability analogous to physical damping mechanisms. Inspired by this insight, L-Learning aims to directly learn the system's energy function and reconstruct its dynamic characteristics from observational data, as illustrated in Figure \ref{fig:method_compare}(b). By embedding the learned dynamics into both stability certification and control design, the framework constructs a stabilizing controller concurrently with the approximation of system dynamics. This ensures not only the alignment of the learned energy model with the physical system but also that the controller continuously enhances tracking precision subject to guaranteed system stability.

Our proposed framework offers the following distinct advantages:
\begin{itemize}[leftmargin=*,label=$\bullet$]
\item {\textbf{Learning Efficiency}:} Unlike conventional architectures that necessitate separate neural networks for policy execution and value estimation, our method obviates this redundancy by directly learning the system dynamics within a unified network, from which both the Lyapunov function and the controller are derived. This integration renders the learning process significantly more streamlined and computationally efficient. Furthermore, by embedding Lagrangian mechanics as a physical prior and constraining the optimization landscape via the Lagrange equation, the method substantially reduces sample complexity and enhances data utilization.

\item {\textbf{Stable and Accurate Tracking}:} The proposed approach leverages data-driven dynamics to synthesize a trajectory tracking controller governed by Lyapunov stability criteria. Crucially, this design offers a dual benefit: first, it strictly enforces the negative definiteness of the Lyapunov derivative, thereby providing a theoretical guarantee of asymptotic tracking error convergence. Second, by explicitly incorporating the learned system dynamics, the controller effectively compensates for complex nonlinear forces, ensuring high-fidelity tracking performance.
\end{itemize}

\section{PRELIMINARIES}

\subsection{The Lagrangian Mechanics}
As established in classical mechanics literature \cite[pp.~16--22]{goldstein2002classical} \cite[pp.~218--281]{morin2008introduction}, the fundamental quantity in this framework is the Lagrangian, denoted as $L$. Defined as the difference between the system's kinetic energy and potential energy (i.e., $L=K-P$), the Lagrangian characterizes the system's energy landscape. From this scalar function, the Lagrange equations can be derived as:
\begin{equation}
\frac{\mathrm{d}}{\mathrm{d}t}\frac{\partial L(\mathbf{q},\dot{\mathbf{q}})}{\partial \dot{\mathbf{q}}} - \frac{\partial L(\mathbf{q},\dot{\mathbf{q}})}{\partial \mathbf{q}} = \mathbf{u}
\label{eq_Lagran}
\end{equation}
where:
\begin{itemize}
    \item $\mathbf{q} = \left[ q_1 \ldots q_n \right]^\top \in \mathbb{R}^n$ is the generalized position and $\dot{\mathbf{q}} = \left[ \dot{q}_1 \ldots \dot{q}_n \right]^\top \in \mathbb{R}^n$ is the generalized velocity; 
    \item $\mathbf{u} = \left[ u_1 \ldots u_n \right]^\top \in \mathbb{R}^n$ is the non-conservative generalized force acting on the system.
\end{itemize}

The scalar form of the Lagrange equation (\ref{eq_Lagran}) is given by:
\begin{equation} 
\frac{\mathrm{d}}{\mathrm{d}t}\frac{\partial L(\mathbf{q},\dot{\mathbf{q}})}{\partial \dot{q}_i} - \frac{\partial L(\mathbf{q},\dot{\mathbf{q}})}{\partial q_i} = u_i, \quad i = 1, \dots, n.
\label{eq_Lagran_scalar}
\end{equation}
Defining $\mathbf{D}(\mathbf{q}) \in \mathbb{R}^{n \times n}$ as the inertia matrix, the Lagrangian can be expressed as:
\begin{equation}
    L(\mathbf{q},\dot{\mathbf{q}}) = \frac{1}{2} \dot{\mathbf{q}}^\top \mathbf{D}(\mathbf{q}) \dot{\mathbf{q}} - P(\mathbf{q}).
    \label{eq_Lagran_K_P}
\end{equation}
Substituting (\ref{eq_Lagran_K_P}) into (\ref{eq_Lagran}) yields the standard robotic dynamics equation:
\begin{equation}
\mathbf{D}(\mathbf{q})\ddot{\mathbf{q}} + \mathbf{C}(\mathbf{q},\dot{\mathbf{q}})\dot{\mathbf{q}} + \mathbf{G}(\mathbf{q}) = \mathbf{u},
\label{eq_universal}
\end{equation}
where $\mathbf{C}(\mathbf{q},\dot{\mathbf{q}}) \in \mathbb{R}^{n \times n}$ denotes the Coriolis and centrifugal matrix, and $\mathbf{G}(\mathbf{q}) \in \mathbb{R}^n$ is the gravitational vector. Consequently, the dynamics matrices $\mathbf{D}$, $\mathbf{C}$, and $\mathbf{G}$ (with dependencies omitted for notational convenience) are fully derived from, and characterized by, the Lagrangian $L(\mathbf{q},\dot{\mathbf{q}})$.

\begin{figure*}[thbp]
\vspace{10pt}
\centering
\subfloat{\includegraphics[width=6.5in]{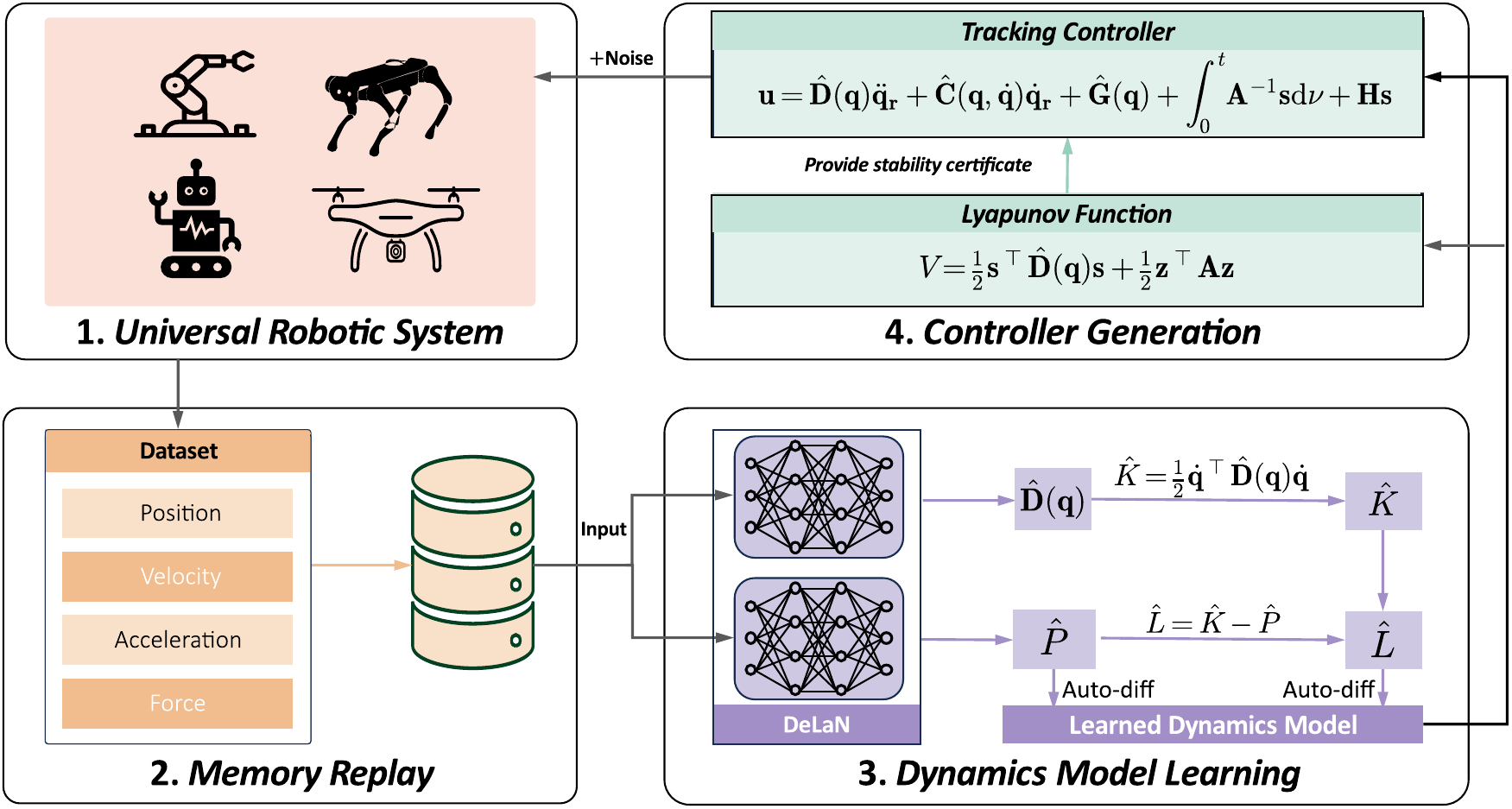} \label{X1}}
\caption{\textrm{The flowchart of the L-Learning method}. The figure illustrates the general approach to tracking control based on L-Learning, where the system provides training data to the network, the network continuously optimizes its estimation of the system's dynamics model based on the input data and then generates a controller to control the system using the estimated dynamics model.} 
\label{figure:L-Learning}
\end{figure*}

\subsection{Tracking Problem Formulation}
Consider an $n$-degree-of-freedom system with generalized coordinates $\mathbf{q}(t)$ and a desired trajectory $\mathbf{q}_\mathrm{d}(t)$. The trajectory tracking problem entails designing a controller that ensures $\mathbf{q}(t)$ converges to $\mathbf{q}_\mathrm{d}(t)$. Traditional nonlinear methods, like backstepping control \cite{krstic1995nonlinear}, necessitate explicit knowledge of the system dynamics to ensure stability. To address scenarios where specific system model parameters are unknown, this paper adopts a data-driven methodology to learn the system dynamics. Upon obtaining the learned model, the system's dynamic equation (\ref{eq_universal}) can be reformulated as
\begin{align}
&\mathbf{\hat D}(\mathbf{q})\mathbf{\ddot q} + \mathbf{\hat C}(\mathbf{q},\mathbf{\dot q})\mathbf{\dot q} + \mathbf{\hat G}(\mathbf{q}) = \mathbf{u} - \mathbf{d}, \label{eq_learn_dyn}\\
&\mathbf{d} =\mathbf{\tilde{D}}(\mathbf{q})\mathbf{\ddot q}+\mathbf{\tilde{C}}(\mathbf{q},\mathbf{\dot q})\mathbf{\dot q}+\mathbf{\tilde{G}}(\mathbf{q}) \label{eq_d}
\end{align}
where:
\begin{itemize}
    \item $\mathbf{\hat D}(\mathbf{q})$, $\mathbf{\hat C}(\mathbf{q},\mathbf{\dot q})$ and $\mathbf{\hat G}(\mathbf{q})$ are the learned dynamics matrices (i.e., the dynamics model);
    \item $\mathbf{\tilde D}(\mathbf{q})$, $\mathbf{\tilde C}(\mathbf{q},\mathbf{\dot q})$ and $\mathbf{\tilde G}(\mathbf{q})$ are the estimated errors of the corresponding matrices. The actual matrix can be described as the sum of the learned value and the error, i.e., $\mathbf{D}(\mathbf{q}) = \mathbf{\hat{D}}(\mathbf{q}) + \mathbf{\tilde{D}}(\mathbf{q}),\mathbf{C}(\mathbf{q},\mathbf{\dot q}) = \mathbf{\hat{C}}(\mathbf{q},\mathbf{\dot q}) + \mathbf{\tilde{C}}(\mathbf{q},\mathbf{\dot q}),\mathbf{G}(\mathbf{q}) = \mathbf{\hat{G}}(\mathbf{q}) + \mathbf{\tilde{G}}(\mathbf{q})$;
    \item $\mathbf{d}$ is the total estimation error of the dynamics model.
\end{itemize}
Overall, the primary objective of this paper is to learn the system's dynamics model, comprising the estimated terms $\mathbf{\hat{D}}$, $\mathbf{\hat{C}}$, and $\mathbf{\hat{G}}$. Based on these learned dynamics, we rigorously prove Lyapunov stability guarantees and synthesize a controller to achieve precise tracking of the desired trajectory. Crucially, all these contributions are achieved without the prior knowledge of the system's specific dynamics.

\section{PROPOSED METHOD: L-LEARNING}

\subsection{Outline} 
Figure \ref{figure:L-Learning} illustrates the overall architecture of the proposed L-Learning framework, which comprises four tightly interconnected components: (1) \textit{the robotic system}, which generates state and action data through interaction with the physical environment; (2) \textit{the memory replay module}, which stores historical data to enhance the diversity of samples within the memory storage; (3) \textit{the dynamics learning phase}, which utilizes Deep Lagrangian Networks (DeLaN) \cite{lutter2023combining} to learn the system's energy function and leverages automatic differentiation to obtain the corresponding dynamics matrices estimates; and (4) \textit{the controller synthesis phase}, which leverages the learned model to design a stable tracking controller. Unlike certificate learning methods that typically rely on pre-collected and stable datasets to construct Lyapunov certificates, our approach directly learns system dynamics from raw interaction data, subsequently allowing for the construction of a Lyapunov function. By unifying dynamics learning, stability certification, and control strategy optimization, this framework achieves a seamless data-driven integration. The cornerstone of this method is the synergy between Deep \textit{L}agrangian Networks (to enhance physical consistency) and \textit{L}yapunov stability analysis (to provide rigorous closed-loop guarantees). This dual foundation not only improves security and learning efficiency but also justifies the nomenclature "\textit{L}-Learning," highlighting its roots in Lagrangian mechanics and Lyapunov stability theory.

\subsection{Dynamics Model Learning}
Dynamics model learning is performed using Deep Lagrangian Networks (DeLaN), a framework that embeds Lagrangian mechanics directly into deep neural network architectures. Formally, we denote the parameterized model as $\hat{\mathbf{D}} = \hat{\mathbf{D}}_\vartheta(\mathbf{q}) = \hat{\mathbf{R}}_\vartheta\hat{\mathbf{R}}_\vartheta^\top + \tau \mathbf{I}$, where $\hat{\mathbf{R}}_\vartheta$ denotes a lower triangular matrix with positive diagonals, $\tau > 0$ is a small constant ensuring positive definiteness, and the neural network $\hat{\mathbf{D}}_\vartheta$, parameterized by weights $\vartheta$, maps the state input $\mathbf{q}$ to the estimated inertia matrix. The core objective of DeLaN is to approximate the true Lagrangian of the system. By explicitly parameterizing both the inertia matrix and the potential energy via dedicated networks, the learned Lagrangian can be expressed as:
\begin{equation}
    \hat{L} = \hat{L}_{\vartheta,\xi}(\mathbf{q},\dot{\mathbf{q}}) = \frac{1}{2}\dot{\mathbf{q}}^\top\hat{\mathbf{D}}_\vartheta(\mathbf{q})\dot{\mathbf{q}} - \hat{P}_{\xi}(\mathbf{q}).
    \label{net_lagrangian}
\end{equation}
Upon approximating the system's Lagrangian, the associated non-conservative forces can be computed using an inverse model derived from the Euler-Lagrange equations (\ref{eq_Lagran}). The inverse dynamics are governed by:
\begin{align}
    \hat{\mathbf{u}} 
    &= \frac{\mathrm{d}}{\mathrm{d}t}\frac{\partial \hat{L}}{\partial \dot{\mathbf{q}}} - \frac{\partial \hat{L}}{\partial \mathbf{q}} \notag\\
    &= \frac{\partial^2 \hat{L}}{\partial \dot{\mathbf{q}}^2}\ddot{\mathbf{q}} + \frac{\partial^2 \hat{L}}{\partial \mathbf{q}\partial \dot{\mathbf{q}}}\dot{\mathbf{q}} - \frac{\partial \hat{L}}{\partial \mathbf{q}}.
\label{eq_inverse_model}
\end{align}
Given that the system's internal energy is not directly measurable, we compute the corresponding non-conservative forces $\hat{\mathbf{u}}$ from the approximated Lagrangian $\hat{L}$ via the inverse dynamics formulation in Equation (\ref{eq_inverse_model}). Consequently, the network training is supervised by leveraging the empirical dynamic transitions $(\mathbf{q},\dot{\mathbf{q}},\ddot{\mathbf{q}},\mathbf{u})$ in conjunction with this inverse model, where $\mathbf{q}$, $\dot{\mathbf{q}}$, and $\ddot{\mathbf{q}}$ denote the system's generalized coordinates, velocities, and accelerations, respectively, and $\mathbf{u}$ represents the applied control inputs. The optimization loss function is formulated as:
\begin{equation}
    \mathrm{Loss} = \frac{1}{N} \sum_{i=1}^{N} \lVert \hat{\mathbf{u}}^{(i)} - \mathbf{u}^{(i)} \rVert
    \label{eq_loss}
\end{equation}
where $\hat{\mathbf{u}} = \frac{\partial^2 \hat{L}}{\partial \dot{\mathbf{q}}^2}\ddot{\mathbf{q}} + \frac{\partial^2 \hat{L}}{\partial \mathbf{q}\partial \dot{\mathbf{q}}}\dot{\mathbf{q}} - \frac{\partial \hat{L}}{\partial \mathbf{q}}$. Therefore, the optimization problem for the dynamics model learning can be expressed as:
\begin{equation}
(\vartheta^*, \xi^*) = \arg\min_{\vartheta, \xi} \frac{1}{N} \sum_{i=1}^{N} \lVert \hat{\mathbf{u}}^{(i)} - \mathbf{u}^{(i)} \rVert.
\label{eq_opti}
\end{equation}

As the proposed learning process is developed utilizing the JAX library \cite{bradbury2021jax}, we exploit its inherent automatic differentiation (auto-diff) capabilities to compute the gradients of the neural network outputs. This mechanism natively supports the requisite partial differentiation of the learned Lagrangian $\hat{L}$ with respect to both $\mathbf{q}$ and $\mathbf{\dot q}$. Consequently, solving the optimization problem defined in Equation (\ref{eq_opti}) yields a highly precise model of the Lagrangian $\hat{L}$. Furthermore, to explicitly reconstruct the underlying system dynamics (namely, the matrices $\hat{\mathbf{D}}$ and $\hat{\mathbf{C}}$, alongside the vector $\hat{\mathbf{G}}$), we formalize the parameter extraction process. Grounded in classical robotics principles \cite{siciliano2009robotics} and the auto-diff mechanism, \textit{Proposition 1} is formulated as:

\begin{proposition}
If the learned Lagrangian $\hat{L}(\mathbf{q},\dot{\mathbf{q}})$ is parametrized as (\ref{net_lagrangian}) with a set of minimal and independent generalized coordinates $\mathbf{q}\in \mathbb{R}^n$, the estimated value of each component of the dynamics $\hat{\mathbf{D}}(\mathbf{q})\in \mathbb{R}^{n\times n}$, $\hat{\mathbf{C}}(\mathbf{q},\dot{\mathbf{q}})\in \mathbb{R}^{n\times n}$ and $\hat{\mathbf{G}}(\mathbf{q})\in \mathbb{R}^{n}$ can be obtained by the following formulas:
\begin{align}
     \hat{D}_{ij}&=\frac{\partial^2\hat{L}}{\partial\dot{q}_i\partial\dot{q}_j} \label{L_D}\\
     \hat{C}_{ij}&=\sum_{k=1}^n\frac{1}{2}\left(\frac{\partial^3\hat{L}}{\partial q_k\partial\dot{q}_i\partial\dot{q}_j}+\frac{\partial^3\hat{L}}{\partial q_j\partial\dot{q}_i\partial\dot{q}_k}-\frac{\partial^3\hat{L}}{\partial q_i\partial\dot{q}_j\partial\dot{q}_k}\right)\dot{q}_k \label{L_C}\\
     \hat{G}_j & = -\frac{\partial}{\partial q_{j}}\left(\hat{L}-\frac{1}{2}\sum_{i=1}^{n}\dot{q}_{i}\frac{\partial\hat{L}}{\partial\dot{q}_{i}}\right) .
     \label{L_G}
\end{align}
And the learned dynamics also conforms to the underlying physical properties of the robotic system, such as $\mathbf{\hat{D}}$ is a symmetric positive definite matrix and $\mathbf{\dot{\hat{D}}}-2\mathbf{\hat{C}}$ is a skew-symmetric matrix.
\label{pro_1}
\end{proposition}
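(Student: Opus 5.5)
The plan is to substitute the parametrization (\ref{net_lagrangian}) into the inverse model (\ref{eq_inverse_model}) and match the result, term by term, against the standard form (\ref{eq_universal}). Everything is a direct computation with the structure $\hat{L}=\frac{1}{2}\dot{\mathbf{q}}^\top\hat{\mathbf{D}}(\mathbf{q})\dot{\mathbf{q}}-\hat{P}(\mathbf{q})$. This Lagrangian is quadratic in $\dot{\mathbf{q}}$, and $\hat{P}$ does not depend on $\dot{\mathbf{q}}$, so differentiating twice in the velocities gives $\partial^2\hat{L}/\partial\dot{q}_i\partial\dot{q}_j=\frac{1}{2}(\hat{D}_{ij}+\hat{D}_{ji})$. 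Since $\hat{\mathbf{D}}_\vartheta=\hat{\mathbf{R}}_\vartheta\hat{\mathbf{R}}_\vartheta^\top+\tau\mathbf{I}$ is symmetric, this equals $\hat{D}_{ij}$, which is (\ref{L_D}). The same parametrization gives positive definiteness: $\mathbf{x}^\top\hat{\mathbf{D}}\mathbf{x}=\|\hat{\mathbf{R}}^\top\mathbf{x}\|^2+\tau\|\mathbf{x}\|^2>0$ for $\mathbf{x}\neq 0$. This settles the first physical property.

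Next, differentiating (\ref{L_D}) once more gives $\partial^3\hat{L}/\partial q_k\partial\dot{q}_i\partial\dot{q}_j=\partial\hat{D}_{ij}/\partial q_k$. The bracket in (\ref{L_C}) then reduces to the Christoffel symbols of the first kind,
\begin{equation*}
c_{ijk}=\frac{1}{2}\left(\frac{\partial \hat{D}_{ij}}{\partial q_k}+\frac{\partial \hat{D}_{ik}}{\partial q_j}-\frac{\partial \hat{D}_{jk}}{\partial q_i}\right),
\end{equation*}
using symmetry of $\hat{\mathbf{D}}$ to reorder indices. I then expand the $i$-th component of (\ref{eq_inverse_model}):
\begin{equation*}
\hat{u}_i=\sum_j\hat{D}_{ij}\ddot{q}_j+\sum_{j,k}\frac{\partial\hat{D}_{ij}}{\partial q_k}\dot{q}_k\dot{q}_j-\frac{1}{2}\sum_{j,k}\frac{\partial\hat{D}_{jk}}{\partial q_i}\dot{q}_j\dot{q}_k+\frac{\partial\hat{P}}{\partial q_i}.
\end{equation*}
The standard trick, as in \cite{siciliano2009robotics}, is to symmetrize the middle sum in $(j,k)$. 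This rewrites the velocity-quadratic terms as $\sum_{j,k}c_{ijk}\dot{q}_j\dot{q}_k=\sum_j\hat{C}_{ij}\dot{q}_j$, which identifies $\hat{\mathbf{C}}$ as in (\ref{L_C}) up to naming of indices.

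For (\ref{L_G}), I use Euler's theorem for the kinetic energy, which is homogeneous of degree two in $\dot{\mathbf{q}}$: $\sum_i\dot{q}_i\,\partial\hat{L}/\partial\dot{q}_i=\dot{\mathbf{q}}^\top\hat{\mathbf{D}}\dot{\mathbf{q}}$. Hence $\hat{L}-\frac{1}{2}\sum_i\dot{q}_i\,\partial\hat{L}/\partial\dot{q}_i=-\hat{P}(\mathbf{q})$, and its negative gradient in $q_j$ is $\partial\hat{P}/\partial q_j=\hat{G}_j$. This is exactly the remaining term of $\hat{u}_j$. Minimality and independence of the coordinates are used only to ensure that the partial derivatives are well defined and that the matching of coefficients is legitimate.

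Finally, for skew-symmetry I compute $\dot{\hat{D}}_{ij}=\sum_k(\partial\hat{D}_{ij}/\partial q_k)\dot{q}_k$ and $\hat{C}_{ij}+\hat{C}_{ji}=\sum_k(c_{ijk}+c_{jik})\dot{q}_k$. In that sum the two $\partial\hat{D}_{ij}/\partial q_k$ halves add, and the cross terms cancel by symmetry of $\hat{\mathbf{D}}$. This yields $\hat{\mathbf{C}}+\hat{\mathbf{C}}^\top=\dot{\hat{\mathbf{D}}}$, i.e.\ $\mathbf{N}=\dot{\hat{\mathbf{D}}}-2\hat{\mathbf{C}}$ satisfies $\mathbf{N}^\top=-\mathbf{N}$. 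The main obstacle is purely bookkeeping: the index convention in (\ref{L_C}) must agree with the Christoffel form that gives skew-symmetry. The choice of $\mathbf{C}$ is not unique, and only the Christoffel choice yields skew-symmetry. So I would carefully check, via the symmetry of $\hat{\mathbf{D}}$, that the three third-derivative terms in (\ref{L_C}) are exactly $\partial_{q_k}\hat{D}_{ij}+\partial_{q_j}\hat{D}_{ik}-\partial_{q_i}\hat{D}_{jk}$, permuting summation indices if necessary before verifying $\hat{\mathbf{C}}+\hat{\mathbf{C}}^\top=\dot{\hat{\mathbf{D}}}$.
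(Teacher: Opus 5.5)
Your proposal is correct and is the standard robot-dynamics derivation that the paper's one-line proof defers to via Spong et al.: Christoffel symbols of the first kind, Euler's homogeneity relation for $\hat{\mathbf G}$, and $\hat{\mathbf C}+\hat{\mathbf C}^\top=\dot{\hat{\mathbf D}}$. The index check you flag needs no permutation, since the three third derivatives in (\ref{L_C}) are literally $\partial_{q_k}\hat D_{ij}$, $\partial_{q_j}\hat D_{ik}$ and $\partial_{q_i}\hat D_{jk}$; one harmless slip is your aside that only the Christoffel choice yields skew-symmetry, which is false (for $n=3$, adding $\phi(\mathbf q)[\dot{\mathbf q}]_\times$ to $\hat{\mathbf C}$ leaves $\hat{\mathbf C}\dot{\mathbf q}$ unchanged and keeps $\dot{\hat{\mathbf D}}-2\hat{\mathbf C}$ skew-symmetric), but nothing in your argument uses it.
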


\begin{proof}
The proof is analogous to the one for related properties in robot dynamics modeling. See \cite{spong2006robot}. 
\end{proof}

\begin{remark}
During the training phase, the estimated Lagrangian $\hat{L}$ converges toward the true system Lagrangian $L$, despite potential stochastic fluctuations in the optimization process. Consequently, the discrepancies between the true dynamics matrices and those derived via Equations (\ref{L_D}), (\ref{L_C}), and (\ref{L_G}) will systematically diminish. Acknowledging the inherent limits of data-driven approximation, these errors are ultimately bounded rather than strictly zero, satisfying $\|\hat{\mathbf{D}}-\mathbf{D}\|_F \rightarrow \epsilon_D$, $\|\hat{\mathbf{C}}-\mathbf{C}\|_F \rightarrow \epsilon_C$, and $\|\hat{\mathbf{G}}-\mathbf{G}\|_F \rightarrow \epsilon_G$, where $\epsilon_{(\cdot)}$ are small positive constants and $\lVert \cdot \rVert_{F}: \mathbb{R}^{m\times n}\rightarrow\mathbb{R}$ is the Frobenius norm.
\end{remark}

\subsection{Controller Generation based on Lyapunov Stability}
The primary objective of the proposed controller is to synthesize a control law  that ensures the system state $(\mathbf{q},\dot{\mathbf{q}})$  asymptotically tracks the reference trajectory $(\mathbf{q}_\mathrm{d},\dot{\mathbf{q}}_\mathrm{d})$ with robust stability. To facilitate the control design, we define the reference velocity ${\mathbf{\dot q}}_\mathrm{r}$ and the composite sliding surface variable $\mathbf{s}$ as follows:
\begin{equation}
    {\mathbf{\dot q}}_\mathrm{r} = {\mathbf{\dot q}}_\mathrm{d} + \boldsymbol{\Lambda} \mathbf{\tilde q}, \quad\mathbf{s}=\mathbf{\dot {\tilde q}} + \boldsymbol{\Lambda} \mathbf{\tilde q}={\mathbf{\dot q}}_\mathrm{r}-\dot{\mathbf{q}}
    \label{track_notations}
\end{equation}
where $\mathbf{\tilde q}=\mathbf{q}_\mathrm{d}-\mathbf{q}$ is defined as the position tracking error and $\boldsymbol{\Lambda}$ is a positive definite gain matrix, i.e., $\boldsymbol{\Lambda}\succ0$. 

Building upon Proposition~\ref{pro_1}, this study models the system dynamics based on the learned Lagrangian function. Given that the learned model cannot fully characterize the system's actual physical properties, modeling uncertainties and approximation errors are inherent. Consequently, the discrepancy between the learned and true dynamics is formulated as a bounded perturbation term $\mathbf{d}$. To ensure system stability during the learning process, a corresponding compensation mechanism is incorporated,
\begin{equation}
    \hat{\mathbf{d}} = \int {{{\bf{A}}^{ - 1}}{\bf{s}}{\rm{d}}t}, \quad \mathbf{z} \triangleq \hat{\mathbf{d}} - \mathbf{d}
    \label{eq_hat_d}
\end{equation}
where $\hat{\mathbf{d}}$ is an integral term to compensate for the dynamics learning error with matrix $\mathbf{A}\succ0$ and $\mathbf{z}$ is the
integral compensation error.

To analyze tracking convergence and the validity of the integral compensation, we define the Lyapunov function candidate:
\begin{equation}
V = \frac{1}{2}{\mathbf{s}^\top}\mathbf{\hat D}(\mathbf{q})\mathbf{s} + \frac{1}{2}{\mathbf{z}^\top}\mathbf{A}\mathbf{z}
\label{eq_Lya}
\end{equation}
where $\hat{\mathbf{D}}(\mathbf{q})$ is the learned inertia matrix. Because $\hat{\mathbf{D}}(\mathbf{q})$ is parameterized via Cholesky decomposition and $\mathbf{A} \succ 0$, $V$ is strictly positive definite. Mathematically, $\mathbf{s}^{\top}\hat{\mathbf{D}}(\mathbf{q})\mathbf{s} \geq \lambda_{\min}(\hat{\mathbf{D}}(\mathbf{q}))\|\mathbf{s}\|^{2}$ and $\mathbf{z}^\top\mathbf{A}\mathbf{z} \geq \lambda_{\min}(\mathbf{A})\|\mathbf{z}\|^2$, where $\lambda_{\min}(\cdot)$ denotes the minimum eigenvalue. By virtue of its structural parameterization, the learned inertia matrix $\hat{\mathbf{D}}(\mathbf{q})$ is uniformly positive definite; hence, there exists a strictly positive constant $\lambda_D > 0$ such that $\mathbf{s}^{\top}\hat{\mathbf{D}}(\mathbf{q})\mathbf{s} \geq \lambda_D\|\mathbf{s}\|^{2}$ for all $\mathbf{q}$. Therefore, $V \to \infty$ as $\|(\mathbf{s}, \mathbf{z})\| \to \infty$, verifying that the candidate function is strictly positive definite and radially unbounded, which is essential for global stability analysis. 

Based on the analysis above, the candidate Lyapunov function (\ref{eq_Lya}) can be used to guide the controller design. We prove \textit{Proposition 2}, which presents the controller design method to ensure the stability of tracking and error compensation.
\begin{proposition}
\textit{For the dynamics system (\ref{eq_learn_dyn}), suppose that $\dot{\mathbf{d}}$ is approximately equal to $\mathbf{0}$ and $\mathbf{H}\succ0$ , if the control law $\mathbf{u}$ is designed as:}
\begin{equation}
\mathbf{u}  = {\mathbf{\hat D}(\mathbf{q}){{\mathbf{\ddot q}}}_{\mathrm{r}} + \mathbf{\hat C}(\mathbf{q},\mathbf{\dot q}){\mathbf{{\dot q}}_\mathrm{r}} + \mathbf{\hat G}(\mathbf{q})} + \mathbf{\hat{d}} + \mathbf{Hs}
\label{control_design}
\end{equation}
where $\hat{\mathbf{d}}$ is the error compensation term for dynamics estimation defined in equation (\ref{eq_hat_d}) and $\mathbf{\hat D}(\mathbf{q})$, $\mathbf{\hat C}(\mathbf{q},\mathbf{\dot q})$, $\mathbf{\hat G}(\mathbf{q})$ are the learned dynamics model as Equations (\ref{L_D})-(\ref{L_G}), then $\mathbf{q}\rightarrow\mathbf{q}_\mathrm{d}$ as $t\rightarrow\infty$.
\end{proposition}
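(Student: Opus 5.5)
We will differentiate the Lyapunov candidate $V$ in (\ref{eq_Lya}) along the closed loop formed by (\ref{eq_learn_dyn}) and the control law (\ref{control_design}). We will show that $\dot V\le -\mathbf{s}^\top\mathbf{H}\mathbf{s}$, and then use a Barbalat-type argument to conclude that $\mathbf{s}\to\mathbf{0}$ and hence $\tilde{\mathbf{q}}\to\mathbf{0}$.

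\textbf{Step 1: error dynamics for $\mathbf{s}$.} Since $\mathbf{s}=\dot{\mathbf{q}}_\mathrm{r}-\dot{\mathbf{q}}$, we have $\hat{\mathbf{D}}\dot{\mathbf{s}}=\hat{\mathbf{D}}\ddot{\mathbf{q}}_\mathrm{r}-\hat{\mathbf{D}}\ddot{\mathbf{q}}$. Solve (\ref{eq_learn_dyn}) for $\hat{\mathbf{D}}\ddot{\mathbf{q}}$ and insert (\ref{control_design}). The $\hat{\mathbf{G}}$ terms cancel, and writing $\hat{\mathbf{C}}\dot{\mathbf{q}}=\hat{\mathbf{C}}\dot{\mathbf{q}}_\mathrm{r}-\hat{\mathbf{C}}\mathbf{s}$ gives
\begin{equation*}
\hat{\mathbf{D}}\dot{\mathbf{s}}=-\hat{\mathbf{C}}\mathbf{s}-\mathbf{H}\mathbf{s}-(\hat{\mathbf{d}}-\mathbf{d})=-\hat{\mathbf{C}}\mathbf{s}-\mathbf{H}\mathbf{s}-\mathbf{z}.
\end{equation*}
This sign bookkeeping, namely checking that the compensation enters as $-\mathbf{z}$ with $\mathbf{z}=\hat{\mathbf{d}}-\mathbf{d}$, is the point to verify carefully.

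\textbf{Step 2: error dynamics for $\mathbf{z}$.} From (\ref{eq_hat_d}), $\dot{\mathbf{z}}=\mathbf{A}^{-1}\mathbf{s}-\dot{\mathbf{d}}$. Under the standing assumption $\dot{\mathbf{d}}\approx\mathbf{0}$, this becomes $\dot{\mathbf{z}}=\mathbf{A}^{-1}\mathbf{s}$.

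\textbf{Step 3: derivative of $V$.} Differentiating (\ref{eq_Lya}) gives
\begin{equation*}
\dot V=\mathbf{s}^\top\hat{\mathbf{D}}\dot{\mathbf{s}}+\tfrac12\mathbf{s}^\top\dot{\hat{\mathbf{D}}}\mathbf{s}+\mathbf{z}^\top\mathbf{A}\dot{\mathbf{z}}
=\tfrac12\mathbf{s}^\top(\dot{\hat{\mathbf{D}}}-2\hat{\mathbf{C}})\mathbf{s}-\mathbf{s}^\top\mathbf{H}\mathbf{s}-\mathbf{s}^\top\mathbf{z}+\mathbf{z}^\top\mathbf{s}.
\end{equation*}
The first term vanishes because $\dot{\hat{\mathbf{D}}}-2\hat{\mathbf{C}}$ is skew-symmetric by Proposition~\ref{pro_1}. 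The cross terms cancel. Therefore $\dot V=-\mathbf{s}^\top\mathbf{H}\mathbf{s}\le0$.

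\textbf{Step 4: convergence.} This is the main obstacle, since $\dot V$ is only negative semidefinite: it does not involve $\mathbf{z}$.

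From $V$ being nonincreasing, and from the radial unboundedness and uniform positive definiteness established after (\ref{eq_Lya}), we get that $\mathbf{s}$ and $\mathbf{z}$ are bounded. Integrating $\dot V$ gives $\int_0^\infty\mathbf{s}^\top\mathbf{H}\mathbf{s}\,\mathrm{d}t\le V(0)$, so $\mathbf{s}\in\mathcal{L}_2$.

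To apply Barbalat's lemma to $\ddot V=-2\mathbf{s}^\top\mathbf{H}\dot{\mathbf{s}}$, we need $\dot{\mathbf{s}}$ bounded. For this we will:
\begin{itemize}
\item note that $\mathbf{s}=\dot{\tilde{\mathbf{q}}}+\boldsymbol{\Lambda}\tilde{\mathbf{q}}$ is a stable first-order filter with $\boldsymbol{\Lambda}\succ0$, so bounded $\mathbf{s}$ gives bounded $\tilde{\mathbf{q}}$ and $\dot{\tilde{\mathbf{q}}}$;
\item assume a bounded reference $\mathbf{q}_\mathrm{d},\dot{\mathbf{q}}_\mathrm{d},\ddot{\mathbf{q}}_\mathrm{d}$, so that $\mathbf{q},\dot{\mathbf{q}}$ are bounded;
\item invoke continuity of the learned network terms, so that $\hat{\mathbf{C}}$ is bounded on bounded sets;
\item use uniform positive definiteness of $\hat{\mathbf{D}}$ to make $\hat{\mathbf{D}}^{-1}$ bounded.
\end{itemize}
Together with Step 1, these give $\dot{\mathbf{s}}$ bounded.

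Barbalat's lemma then yields $\dot V\to0$, i.e. $\mathbf{s}\to\mathbf{0}$. Finally, the filter relation $\dot{\tilde{\mathbf{q}}}=-\boldsymbol{\Lambda}\tilde{\mathbf{q}}+\mathbf{s}$ is exponentially stable with an input that vanishes asymptotically, so $\tilde{\mathbf{q}}\to\mathbf{0}$, that is, $\mathbf{q}\to\mathbf{q}_\mathrm{d}$.

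If $\dot{\mathbf{d}}$ is only small rather than zero, the extra term $-\mathbf{z}^\top\mathbf{A}\dot{\mathbf{d}}$ would give only ultimate boundedness. The proposition's approximation $\dot{\mathbf{d}}\approx\mathbf{0}$ is exactly what removes this term.
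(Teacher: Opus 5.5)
Your proof is correct, and up to $\dot V=-\mathbf{s}^\top\mathbf{H}\mathbf{s}$ it matches the paper's argument. Both use the same $V$, the skew-symmetry of $\dot{\hat{\mathbf{D}}}-2\hat{\mathbf{C}}$, and the cancellation of $-\mathbf{s}^\top\mathbf{z}+\mathbf{z}^\top\mathbf{A}\dot{\mathbf{z}}$ once $\dot{\mathbf{z}}=\mathbf{A}^{-1}\mathbf{s}$. Writing the closed loop first as $\hat{\mathbf{D}}\dot{\mathbf{s}}=-\hat{\mathbf{C}}\mathbf{s}-\mathbf{H}\mathbf{s}-\mathbf{z}$ is only a reorganization, and your signs are right.

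The genuine difference is the convergence step. The paper calls $\dot V$ negative definite and appeals to the invariance principle. You correctly observe that $\dot V$ is only negative semidefinite in $(\mathbf{s},\mathbf{z})$, bound $\mathbf{s},\mathbf{z},\tilde{\mathbf{q}},\dot{\tilde{\mathbf{q}}},\dot{\mathbf{s}}$, and apply Barbalat's lemma instead.

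Your route fits a tracking problem better. With time-varying $\mathbf{q}_\mathrm{d}(t)$ the closed loop is non-autonomous, so the invariance principle cited from Khalil (an autonomous result) does not apply directly. Barbalat needs only $\mathbf{s}\to\mathbf{0}$ and the exponentially stable filter $\dot{\tilde{\mathbf{q}}}=-\boldsymbol{\Lambda}\tilde{\mathbf{q}}+\mathbf{s}$. It also makes explicit the hypotheses the paper leaves implicit: a bounded reference, enough smoothness of the network that $\hat{\mathbf{C}}$ is bounded on bounded sets, and $\hat{\mathbf{D}}\succeq\tau\mathbf{I}$.

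What an invariance-type argument buys, where it legitimately applies (e.g.\ constant $\mathbf{q}_\mathrm{d}$), is the stronger conclusion $\mathbf{z}\to\mathbf{0}$, i.e.\ $\hat{\mathbf{d}}\to\mathbf{d}$. This holds because on $\{\mathbf{s}\equiv\mathbf{0}\}$ your error equation forces $\mathbf{z}=\mathbf{0}$. In your framework, getting it would take a second application of Barbalat to $\mathbf{s}$ itself, which needs bounded $\ddot{\mathbf{s}}$. That is where a bound on $\ddot{\mathbf{q}}_\mathrm{d}$ is actually used; you do not need it for bounded $\dot{\mathbf{s}}$. None of this is required for $\mathbf{q}\to\mathbf{q}_\mathrm{d}$.
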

\begin{proof}
(i) The derivative of the Lyapunov function (\ref{eq_Lya}) is described by
\begin{align}
\dot V(t) &= {\mathbf{s}^\top}\mathbf{\hat D}(\mathbf{q})\mathbf{\dot s} + \frac{1}{2}{\mathbf{s}^\top}\mathbf{\dot{\hat{D}}}(\mathbf{q})\mathbf{s} + {\mathbf{z}^\top}\mathbf{A}\mathbf{\dot z} \notag \\
{\rm{       }} &= {\mathbf{s}^\top}\mathbf{\hat D}({{\mathbf{\ddot q}}_\mathrm{r}} - \mathbf{\ddot q}) + \frac{1}{2}{\mathbf{s}^\top}\mathbf{\dot{\hat{D}}}(\mathbf{q})\mathbf{s} + {\mathbf{z}^\top}\mathbf{A}\mathbf{\dot z}.
\label{e31}
\end{align}
(ii) By substituting the dynamics of the system (\ref{eq_learn_dyn}) into Equation (\ref{e31}) and considering that $\mathbf{\dot {\hat{D}}}(\mathbf{q}) - 2\mathbf{\hat{C}}(\mathbf{q},\mathbf{\dot q})$ is a skew-symmetric matrix, $\dot V(t)$ is simplified to:
\begin{equation}
\dot{V}={{\mathbf{s}}^{\top}}(\mathbf{\hat{D}}(\mathbf{q}){{\mathbf{\ddot{q}}}_\mathrm{r}}+\mathbf{\hat{C}}(\mathbf{q},\mathbf{\dot q}){{\mathbf{\dot{q}}_\mathrm{r}}}+\mathbf{\hat{G}}(\mathbf{q})+\mathbf{d}-\mathbf{u})+{\mathbf{z}^\top}\mathbf{A}\mathbf{\dot z}.
\label{e32}
\end{equation}
(iii)The control $\mathbf{u}$ is designed as Equation (\ref{control_design}). By substituting the control $\mathbf{u}$ into Equation (\ref{e32}) and simplifying, we can get
\begin{equation}
\dot V(t) =  - {\mathbf{s}^\top}\mathbf{H} \mathbf{s} - {\mathbf{s}^\top}\mathbf{z} + {\mathbf{z}^\top}\mathbf{A}{\mathbf{\dot z}}.
\label{e33}
\end{equation}
Due to the supposition $\dot{\mathbf{d}}=0$ in \textit{Proposition 2}, $\dot{\mathbf{z}} = \dot{\hat{\mathbf{d}}} - \dot{\mathbf{d}} = \dot{\hat{\mathbf{d}}}= {{\bf{A}}^{ - 1}}{\bf{s}}$ and
then equation (\ref{e33}) is  
\begin{align}
\dot V(t) &= - {\mathbf{s}^\top}\mathbf{H} \mathbf{s} - {\mathbf{s}^\top}\mathbf{z} + {\mathbf{z}^\top}\mathbf{A}{\mathbf{\dot z}} \notag \\
&=- {\mathbf{s}^\top}\mathbf{H} \mathbf{s} - {\mathbf{s}^\top}\mathbf{z} + {\mathbf{z}^\top}\mathbf{A}\mathbf{A}^{{\mathrm{-1}}}\mathbf{s} \notag \\
&=- {\mathbf{s}^\top}\mathbf{H} \mathbf{s}<0, \forall \mathbf{s} \ne \mathbf{0}.
\label{e34}
\end{align}
Based on the negative definiteness of $\dot{V}$ and the invariance principle in \cite{khalil2002nonlinear}, it follows that $\mathbf{q}\rightarrow\mathbf{q}_\mathrm{d}$ as $t\rightarrow\infty$ under the control (\ref{control_design}).
\end{proof}

\begin{remark}
In practical implementations, directly employing the nominal parameter adaptation law $\dot{\mathbf{z}} = \mathbf{A}^{-1}\mathbf{s}$ may trigger significant transient oscillations or even jeopardize system stability. This susceptibility primarily arises from substantial estimation inaccuracies during the nascent stages of training or the influence of unknown bounded perturbations. To bolster the system's robustness throughout the initial training phase and ensure global boundedness, the adaptation law is augmented with a leakage term featuring a time-varying relaxation factor. Consequently, the modified adaptation law is formulated as:
\begin{equation}
\mathbf{\dot z} = -\alpha(t) \mathbf{z} + \mathbf{A}^{-1}\mathbf{s}, \quad \alpha(t)>0.
\label{eq_leakage_update}
\end{equation}
Substituting equation (\ref{eq_leakage_update}) into equation (\ref{e33}), we can obtain
\begin{align}
\dot V(t)  
 = - {\mathbf{s}^\top}\mathbf{H} \mathbf{s} -\alpha(t) {\mathbf{z}^\top}\mathbf{A}\mathbf{z} . 
\end{align}
The relaxation gain $\alpha(t)$ is designed as a monotonically decreasing function. During the early training phase, the adaptation law (\ref{eq_leakage_update}) is utilized to facilitate the simultaneous convergence of the sliding variable $\mathbf{s}$ and the compensation error $\mathbf{z}$. As the fidelity of the dynamics estimation improves and the modeling error $\mathbf{d}$ attenuates, thereby approximately satisfying the quasi-static condition $\dot{\mathbf{d}} \approx 0$, the time derivative of the Lyapunov function asymptotically recovers its nominal form as expressed in (\ref{e34}).

\end{remark}

\subsection{Implementation}
For clarity and ease of implementation, the complete procedure of the proposed algorithm is described in the form of pseudocode below. The pseudocode outlines each step of the algorithm in a structured manner, making the workflow explicit and reproducible.

\begin{algorithm}[H]
\caption{L-Learning}
\label{alg:L-Learning}
\begin{algorithmic}[1]
\Statex \textbf{Input:} 1. An initialized controller. 2. A desired trajectory $\mathbf{q}_\mathrm{d}(t)$.
\Statex \textbf{Output:} Tracking controller $\mathbf{u}=\mathbf{c}(\mathbf{q},\mathbf{\dot q})$.
\State \textbf{Initialization:} Set the controller (\ref{control_design}) with initial $\mathbf{\hat{D}}_0,\mathbf{\hat{C}}_0,\mathbf{\hat{G}}_0$, initialize the Replay Buffer \(\mathcal{M}\), the standard deviation \(I_0\), and the iteration termination indicator $E_{min}$

\ForAll{\(k= 1,\dots,K\) }
\State  Reduce the standard deviation of noise $I_k=I_0(1-\frac{k}{K})$
\State  Update the noise $\mathrm{Rand}(0,I_k)$
\State  Add noise to the controller $\mathbf{c}'_{k-1}=\mathbf{c}_{k-1}+\mathrm{Rand}(0,I_k)$
\State  Get $\mathcal{D}_k={({\mathbf{q}_{{k_i}}},\mathbf{\dot q}_{k_i},\mathbf{\ddot q}_{k_i},\boldsymbol{\tau}_{k_i})}$ with $\mathbf{c}'_{k-1}(\mathbf{q}_{k_i},\mathbf{\dot q}_{k_i})$
\State  $\mathcal{M}\leftarrow\mathcal{M}\cup\mathcal{D}_k$
\State  Randomly sample batches $\mathcal{B}_k$ from $\mathcal{M}$
\For{each batch in $\mathcal{B}_k$}
\State  Update parameters $\vartheta_{i+1}, \xi_{i+1} \leftarrow \vartheta_{i}, \xi_{i} - \eta \nabla_{\vartheta, \xi} \hat{L}$
\EndFor
\State $\mathbf{\hat{D}}_k,\mathbf{\hat{C}}_k,\mathbf{\hat{G}}_k\leftarrow\hat{L}_k$ as Equation (\ref{L_D})-(\ref{L_G})
\State Generate $\mathbf{c}_k(\mathbf{q},\mathbf{\dot q})$ using $
\mathbf{\hat{D}}_k,\mathbf{\hat{C}}_k,\mathbf{\hat{G}}_k$ as Equation (\ref{control_design})
\State Testing the tracking performance of $c_k(\mathbf{q},\mathbf{\dot q})$ and obtain the corresponding $E_k$
\State \textbf{if} $E_k<E_{min}$ \textbf{do} Select $\mathbf{c}_k(\mathbf{q},\mathbf{\dot q})$ as the final controller $\mathbf{c}(\mathbf{q},\mathbf{\dot q})$ and then break out
\State \textbf{else} Continue
\EndFor
\end{algorithmic}
\end{algorithm}

\begin{remark}
During the dynamics model learning process, a high level of noise is initially injected into the driving force and then gradually reduced as training progresses. Introducing substantial noise in the early stage encourages diverse system responses, thereby expanding the sampled state space. This mechanism is analogous to maximizing policy entropy \cite{sutton1998reinforcement,haarnoja2018soft,moerland2023model}, and it facilitates the acquisition of a more comprehensive characterization of the system dynamics.
\end{remark}

\section{EXPERIMENT}

To verify the learning efficiency and control performance of the proposed L-Learning algorithm, we deployed it in the simulation systems of a 2-degree-of-freedom (2-DOF) robotic arm and a quadrotor UAV to track the pre-set desired trajectories. Meanwhile, we conducted a comprehensive comparison between the proposed method and the classical control method PID, as well as the reinforcement learning algorithms SAC \cite{haarnoja2018soft} and TD3 \cite{fujimoto2018addressing} (both of which are baseline methods provided by Stable-Baselines3 \cite{raffin2021stable}). Regarding the parameter selection problem of PID in the experiment, we first determine the search range based on experience, and then use the Differential Evolution (DE) algorithm to search for a set of control parameters that optimize the tracking performance index. In SAC and TD3, the observation space consists of system kinematics, trajectory tracking errors, and time. The action space is defined as joint torques for the 2-DOF arm and rotor speed deviations from the hover point for the UAV. Crucially, to incentivize high-precision tracking, the reward function adopts a Gaussian-like formulation, defined as the weighted sum of the exponentials of the negative squared tracking errors (specifically covering attitude, angular velocity, and altitude deviations). This bounded reward design facilitates stable convergence during training. The physical simulator for the 2-DOF robotic arm is built directly on the basis of its dynamic equations. For the UAV simulation experiment, we selected the Crazyflie 2.0 quadrotor as the research object and conducted tests in the PyBullet physics engine using the open-source gym-pybullet-drones framework \cite{panerati2021learning}. This framework is a quadrotor simulation platform specifically designed for reinforcement learning and control research. Among them, PyBullet provides underlying rigid body dynamics, including six-degree-of-freedom motion, aerodynamic drag, and rotor thrust-torque coupling, which can realistically simulate the dynamics of UAVs. The model parameters of the 2-DOF robotic arm and the Crazyflie 2.0 quadrotor UAV are presented in Table \ref{tab:2darm_params} and Table \ref{tab:crazyflie_params}, respectively. To facilitate the presentation of results, we define \textbf{RMSE}: ${(\tfrac{1}{T}\int_{0}^{T} e(t)^2 \mathrm{d}t)}^{\frac{1}{2}}$, and
\textbf{ITAE}: $\int_{0}^{T} t \cdot |e(t)| \mathrm{d}t$, where $e \in \mathbb{R}$ is the scalar tracking error.
\begin{table}[htbp]
\centering
\caption{2-DOF Robotic Arm Model Parameters}
\label{tab:2darm_params}
\begin{tabular}{llll}
\toprule
\textbf{Parameter} & \textbf{Symbol} & \textbf{Value} & \textbf{Unit} \\
\midrule
Mass of Arm 1       & $m_1$           & \num{1.0}      & \si{kg} \\
Mass of Arm 2       & $m_2$           & \num{1.0}      & \si{kg} \\
Length of Arm 1     & $l_1$           & \num{1.0}      & \si{m} \\
Length of Arm 2     & $l_2$           & \num{1.0}      & \si{m} \\
Moment of Inertia (Arm 1\& 2)   & $\mathbf{J}$  & \num{0.083}      & \si{kg.m^2} \\
Max torque per Joint & $\tau_{\max}$    & \num{30.0}       & \si{N.m} \\
Simulation step      & $\Delta t_s$  & $1/240$          & \si{s} \\
Control step         & $\Delta t_c$  & $1/48$           & \si{s} \\
\bottomrule
\end{tabular}
\end{table}

\begin{table}[htbp]
\centering
\caption{Crazyflie 2.0 Model Parameters}
\label{tab:crazyflie_params}
\begin{tabular}{llll}
\toprule
\textbf{Parameter} & \textbf{Symbol} & \textbf{Value} & \textbf{Unit} \\
\midrule
Mass                 & $m$           & \num{0.027}      & \si{kg} \\
Arm length           & $l$           & \num{0.046}      & \si{m} \\
Rotor radius         & $r$           & \num{0.022}      & \si{m} \\
Inertia (x,y,z)      & $\mathbf{J}$  & $10^{-5}\mathrm{diag}(1.4,\,1.4,\,2.2)$ & \si{kg.m^2} \\
Max thrust per rotor & $T_{\max}$    & \num{0.16}       & \si{N} \\
Thrust coefficient   & $k_f$         & \num{3.16e-10}   & \si{N.s^2} \\
Torque coefficient     & $k_m$         & \num{7.94e-12}   & \si{N.m.s^2} \\
Simulation step      & $\Delta t_s$  & $1/240$          & \si{s} \\
Control step         & $\Delta t_c$  & $1/48$           & \si{s} \\
\bottomrule
\end{tabular}
\end{table}

\subsection{2-DOF robotic arm}
In this section, we verified the proposed method by performing trajectory tracking tasks on a 2-DOF robotic arm and compared it with PID, SAC, and TD3 algorithms. This robotic arm system consisted of two degrees of freedom: the angle $\alpha$ between the first link and the vertical line in the base coordinate system, and the angle $\beta$ of the second link relative to the first link, with counterclockwise rotation defined as positive.The state of the system was represented by $(\alpha, \beta, \dot{\alpha}, \dot{\beta})$. For this system, we set two different desired trajectories, denoted by $(\alpha_\mathrm{d}, \beta_\mathrm{d})$. As shown in Figure \ref{fig:2Drobotarm_track}, the desired trajectory in Figure(A) required both links of the system to perform periodic sinusoidal motions, while the desired trajectory in Figure(B) required both links to stably maintain the specified state in the form of a quintic polynomial. The tracking error of the 2-DOF robotic arm was defined as: $e={\lVert {(\alpha ,\beta ) - ({\alpha _\mathrm{d}},{\beta _\mathrm{d}})} \rVert_{2}}$.

\begin{figure}
    \vspace{8pt}
    \centering
    \includegraphics[width=0.98\linewidth]{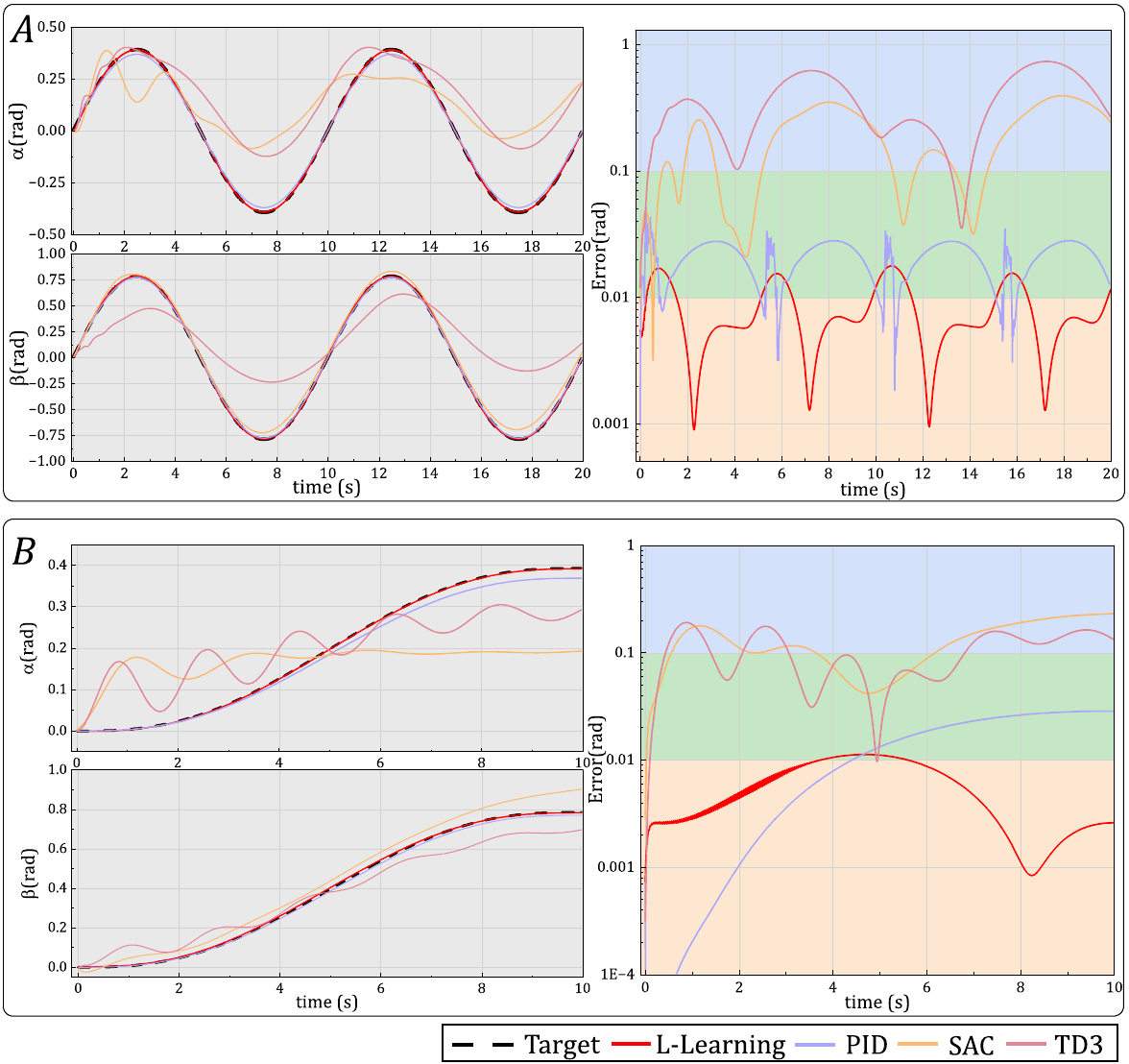}
    \caption{\textrm{Tracking performance of the 2-DOF robotic arm under different algorithms}. In the legend, “Target” denotes the desired trajectory. Results of L-Learning, SAC, and TD3 are obtained with 10,000 training samples.}
    \label{fig:2Drobotarm_track}
\end{figure}

\begin{table}[htbp]
\centering
\caption{Performance Metrics For 2-DOF Robotic Arm}
\label{tab:Metrics_2darm}
\begin{tabular}{llllll}
\toprule
\textbf{Method} & \textbf{Trajectory} & \textbf{Samples} & \textbf{RMSE} & \textbf{ITAE} & \textbf{Time}\\
\midrule
\textbf{Ours}     &  Sine   & \num{10000} & \num{0.009}  & \num{1.561} & \num{01}min \num{10}s\\
PID      &  Sine   & - & \num{0.022}  & \num{4.152} & -\\
SAC      &  Sine   & \num{10000} & \num{0.225}  & \num{45.047} & \num{1}min \num{55}s\\
SAC      &  Sine   & \num{50000} & \num{0.021}  & \num{3.919} & \num{10}min \num{49}s\\
TD3      &  Sine   & \num{10000} & \num{0.412}  & \num{81.364} &\num{01}min \num{13}s\\
TD3      &  Sine   & \num{50000} & \num{0.029}  & \num{5.409} &\num{09}min \num{30}s\\
\midrule
\textbf{Ours}     &  Quintic   & \num{10000} & \num{0.007}  & \num{ 0.259} & \num{01}min \num{10}s\\
PID      &  Quintic   & - & \num{0.018}  & \num{1.013} & -\\
SAC      &  Quintic   & \num{10000} & \num{0.142}  & \num{7.492} & \num{02}min \num{54}s\\
SAC      &  Quintic   & \num{30000} & \num{0.023}  & \num{1.202} & \num{07}min \num{9}s\\
TD3      &  Quintic   & \num{10000} & \num{0.114}  & \num{5.557} &\num{01}min \num{17}s\\
TD3      &  Quintic   & \num{30000} & \num{0.035}  & \num{1.575} &\num{04}min \num{3}s\\

\bottomrule
\end{tabular}

\begin{tablenotes}
\footnotesize
\item \textbf{Note:} ``Sine" means the desired trajectory in sinusoidal form, and ``Quintic" means the desired trajectory in the form of quintic polynomial. ``\textbf{Time}" includes sample collection time and model training time.
\end{tablenotes}
\end{table}

\begin{figure*}[thbp]
\vspace{10pt}
\centering
\subfloat{\includegraphics[width=6.95in]{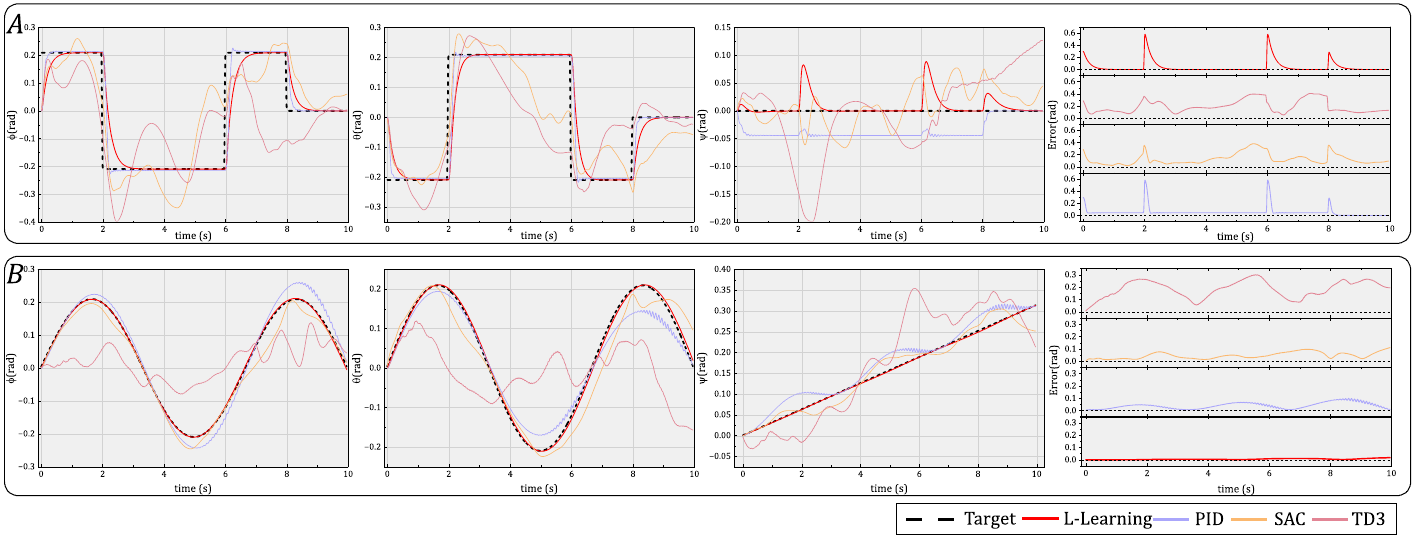}} \label{X}
\caption{\textrm{Attitude tracking control performance of the Crazyflie 2.0 quadrotor UAV under different algorithms}. In the legend, “Target” denotes the desired trajectory. The results of L-Learning, SAC, and TD3 are obtained with 100,000 training samples.} 
\label{figure:uav_tracking}
\end{figure*}

The experimental results of the trajectory tracking for the 2-DOF robotic arm under different algorithms were shown in Figure \ref{fig:2Drobotarm_track} and Table \ref{tab:Metrics_2darm}. For both desired trajectories, L-Learning exhibited superior tracking accuracy, indicating that it could effectively capture the dynamic characteristics of the system and achieve high-precision control.Compared with traditional PID control, L-Learning incorporated data-driven system dynamics information into the control law design, which enabled it to more fully capture the system's dynamic characteristics and achieve a faster tracking response.By comparing reinforcement learning methods such as SAC and TD3, it was found that under the same amount of sample data, the controller obtained by L-Learning achieved better tracking performance, while SAC and TD3 required more samples to approximate its performance (as shown in Table \ref{tab:Metrics_2darm}). This result verified the superiority of L-Learning in terms of data utilization efficiency and learning efficiency.


\subsection{Crazyflie 2.0: The Quadrotor UAV}
In this section, we validated the proposed method by conducting attitude trajectory tracking tasks on a Crazyflie 2.0 quadrotor UAV, providing a comprehensive comparison against PID, SAC, and TD3 algorithms. To faithfully reproduce the dynamic characteristics of the physical platform, we constructed the simulation environment within PyBullet, utilizing URDF files to accurately model the structural and inertial parameters of the quadrotor. Unlike the planar robotic arm where joint angles were naturally chosen, for the quadrotor attitude dynamics, we defined the generalized coordinates as the Euler angles, i.e., $\mathbf{q} = \boldsymbol{\chi} = [\phi, \theta, \psi]^\top$, where $\phi$, $\theta$, and $\psi$ represented the roll, pitch, and yaw angles, respectively. Consequently, the corresponding non-conservative generalized forces were defined as the control torques acting around the body axes, denoted as $\mathbf{u} = \boldsymbol{\tau} = [\tau_\phi, \tau_\theta, \tau_\psi]^\top$. Since the primary objective of this experiment was attitude regulation, we established a baseline rotor speed (RPM) derived from the UAV's mass and lift coefficient. This provided sufficient lift to maintain a stable hover, thereby isolating the evaluation of the attitude controller's performance. For the trajectory tracking tasks, we designed two distinct reference trajectories, denoted by $\mathbf{q}_\mathrm{d} = [\phi_\mathrm{d}, \theta_\mathrm{d}, \psi_\mathrm{d}]^\top$. The first scenario maintained a constant yaw angle while the pitch and roll angles underwent step changes. The second scenario imposed a linear increase in the yaw angle, during which the pitch and roll angles varied sinusoidally, as was illustrated in Figure \ref{figure:uav_tracking} (A and B). The tracking performance is quantified by the Euclidean norm of the attitude error, defined as $e = \| \boldsymbol{\chi} - \boldsymbol{\chi}_\mathrm{d} \|_2$.

\begin{table}[htbp]
\centering
\caption{Performance Metrics For Crazyflie 2.0 UAV}
\label{tab:Metrics_UAV}
\begin{tabular}{llllll}
\toprule
\textbf{Method} & \textbf{Trajectory} & \textbf{Samples} & \textbf{RMSE} & \textbf{ITAE}  & \textbf{Time}\\
\midrule
\textbf{Ours}     &  Step   & \num{100000} & \num{0.107}  & \num{1.705} & \num{5}min \num{11}s\\
PID      &  Step   & - & \num{ 0.098}  & \num{2.195} & -\\
SAC      &  Step   & \num{100000} & \num{0.156}  & \num{ 6.914} &\num{22}min \num{6}s\\
SAC      &  Step   & \num{250000} & \num{0.090}  & \num{3.255} &\num{51}min \num{28}s\\
TD3      &  Step   & \num{100000} & \num{0.218}  & \num{ 9.955} &\num{14}min \num{33}s\\
TD3      &  Step   & \num{250000} & \num{0.101}  & \num{ 4.236} &\num{46}min \num{17}s\\
\midrule
\textbf{Ours}     &  Sine   & \num{100000} & \num{0.007}  & \num{ 0.392} &\num{5}min \num{11}s\\
PID      &  Sine   & - & \num{0.048}  & \num{2.397} & -\\
SAC      &  Sine   & \num{100000} & \num{0.054}  & \num{2.852} &\num{22}min \num{49 }s\\
SAC      &  Sine   & \num{250000} & \num{0.023}  & \num{1.258} &\num{51}min \num{28}s\\
TD3      &  Sine   & \num{100000} & \num{0.196}  & \num{9.816} &\num{14}min \num{6}s\\
TD3      &  Sine   & \num{250000} & \num{0.172}  & \num{7.321} &\num{38}min \num{33}s\\
TD3      &  Sine   & \num{500000} & \num{0.057}  & \num{2.548} &\num{65}min \num{27}s\\

\bottomrule
\end{tabular}

\begin{tablenotes}
\footnotesize
\item \textbf{Note:} ``Step" refers to the desired trajectory with step change, and ``Sine" means the desired trajectory in sinusoidal form. ``\textbf{Time}" refers to the sum of sample collection time and model training time.
\end{tablenotes}
\end{table}

As was illustrated in Figure \ref{figure:uav_tracking}, the proposed method achieved accurate and stable attitude tracking with a fast transient response across diverse reference trajectories. Compared with PID control, L-Learning better captured and compensated for system dynamics, which resulted in substantially reduced tracking errors. While many reinforcement learning (RL) methods, such as SAC and TD3, offered theoretical optimality guarantees at the cost of extensive data and long training times, the proposed approach did not explicitly pursue strict optimality. Instead, L-Learning aimed to obtain a stable and effective control policy with significantly reduced sample requirements and training duration. As was shown in Table \ref{tab:Metrics_UAV}, L-Learning efficiently exploited dynamic information from limited data. This performance was particularly advantageous for UAV applications, where data collection was costly and rapid deployment was essential.


\section{Conclusion}

This work introduces the L-Learning method, which achieves substantial progress in both theoretical analysis and experimental validation. The method provides an efficient, stable, and data-driven solution for trajectory tracking control in general robotic systems. In addition to its theoretical contributions, L-Learning demonstrates strong empirical performance and broad practical applicability. Looking ahead, our objective is to extend its applicability to more challenging and realistic robotic tasks, particularly those involving environmental uncertainty and sim-to-real transfer. To further strengthen the framework, we plan to enhance the robustness and generalization capability of state representations, especially under noisy or partially observed conditions. Moreover, we aim to explore more effective strategies for leveraging available data to improve training efficiency and control performance. Finally, the unified use of neural networks to model physical dynamics, Lyapunov functions, and controllers opens promising opportunities for end-to-end learning and optimization. We will continue to pursue these directions in future research.

\addtolength{\textheight}{-12cm}   









\bibliographystyle{IEEEtran}
\bibliography{IEEEabrv,refs}

\end{document}